\documentclass{article}
\usepackage{iclr2026_conference,times}

\usepackage{amsmath,amsfonts,bm}

\def\eqref#1{equation~\ref{#1}}

\def\1{\bm{1}}

\DeclareMathAlphabet{\mathsfit}{\encodingdefault}{\sfdefault}{m}{sl}
\SetMathAlphabet{\mathsfit}{bold}{\encodingdefault}{\sfdefault}{bx}{n}

\usepackage{amsmath,amssymb,amsthm,mathtools}
\usepackage{booktabs,multirow,tabularx}
\usepackage{longtable}
\usepackage{graphicx}
\usepackage{float}
\usepackage{xcolor}
\usepackage{microtype}
\usepackage{enumitem}
\usepackage{tikz}
\usetikzlibrary{arrows.meta,positioning,fit}
\usepackage{hyperref}
\hypersetup{hidelinks}
\usepackage{url}

\title{COVER: Identifiable Evaluation of Coalition Routing}

\author{Raghul Sugumar \qquad Amrit Gopinath\\
Sri Sivasubramaniya Nadar College of Engineering}

\iclrfinalcopy

\newtheorem{theorem}{Theorem}
\newtheorem{proposition}{Proposition}

\newcommand{\method}{\textsc{COVER}}
\newcommand{\ptgst}{\textsc{Partition-COVER}}
\newcommand{\unified}{\textsc{Unified-COVER}}
\newcommand{\agents}{\mathcal{A}}
\newcommand{\tasks}{\mathcal{D}}

\newcommand{\regret}{\mathcal{R}}

\begin{document}
\maketitle

\begin{abstract}
When a multi-agent system changes its team, it also changes the messages and final answer it produces, so an end-to-end accuracy gap does not by itself identify a routing effect. We introduce \method{}, an evaluation contract that fixes a public information boundary, downstream stack $G$, and finite legal team family before outcomes are generated. Complete coverage identifies exact \emph{finite-benchmark} oracle regret conditional on that stack. For any finite collection of frozen policies, executing the union of their distinct selected teams is the minimal assumption-free support for every pairwise policy contrast, though not for absolute oracle regret. Two controlled tables with source-ID-disjoint splits test the instrument. On MuSiQue-12, a pre-specified \emph{privileged positive control} improves regret from 0.532 to 0.402; a later public-interface control reaches 0.424 versus 0.554 but is retrospective. On HotpotQA-4, a pre-specified public direct scorer improves regret from 0.313 to 0.110. In fixed-stack Llama execution, verified route regret improves by 0.190, while the raw-answer gain is 0.010 with an interval crossing zero. A five-family ToolSandbox variant-shift validation exhaustively evaluates 16 declared teams on 14 untouched task variants (224/224 valid rows): the declared-family oracle reaches 0.768 safe-evidence completion, while the prospectively frozen router gets 0.637 (regret 0.131), failing the predeclared 0.10 criterion. A later retrospective comparator reaches 0.655, matching all-workers with 4.57 versus 5.00 workers on average. Thus COVER exposes selection headroom without manufacturing a routing win. A crossed-stack diagnostic shows absolute scores depend on $G$ but finds no detectable router-by-finalizer interaction. COVER is an auditable measurement methodology, not a claim of stack-invariant or universal agent-routing superiority.
\end{abstract}

\section{The problem in one picture}
\label{sec:intro}

Suppose two multi-agent systems choose different workers for the same question. If one answers correctly more often, was its \emph{selection} better? Not necessarily: changing a team also changes private messages and can change how a finalizer synthesizes them. This is the basic confound in routing evaluation.

\begin{figure}[H]
\centering
\begin{tikzpicture}[font=\small,box/.style={draw,rounded corners,align=center,minimum height=9mm,inner sep=4pt},pub/.style={box,fill=blue!8},hid/.style={box,fill=orange!10},resultbox/.style={box,fill=green!9},arr/.style={-{Latex[length=2mm]},thick}]
\node[pub] (input) {Public task\\and capability cards};
\node[pub,right=7mm of input] (route) {Frozen router\\selects team $S$};
\node[hid,right=7mm of route] (work) {Selected workers see\\private evidence};
\node[resultbox,right=7mm of work] (final) {Fixed stack $G$\\records value $y_{x,S}^{G}$};
\draw[arr] (input)--(route); \draw[arr] (route)--(work); \draw[arr] (work)--(final);
\node[below=5mm of route,align=center] {Repeat for every legal $S\in\mathcal F_x$;\\then compare all routers on the \emph{same} table.};
\end{tikzpicture}
\caption{COVER changes only the selected coalition. Orange information is unavailable when the route is chosen.}
\label{fig:contract}
\end{figure}

\method{} makes the counterfactual explicit. For a task $x$, let $\agents_x$ be its workers, $\mathcal F_x\subseteq2^{\agents_x}$ its legal teams, and $G$ the frozen stack comprising public/private timing, workers, message order, prompt and token policy, finalizer, judge, and utility. Let $y_{x,S}^{G}\in[0,1]$ be the outcome of executing exactly $S$ under $G$. We call
\begin{equation}
 V_x^{\mathrm{bench},G}(S):=y_{x,S}^{G},\qquad \regret^{\mathrm{bench},G}(r)=\frac1{|\tasks|}\sum_{x\in\tasks}\left[\max_{S\in\mathcal F_x}y_{x,S}^{G}-y_{x,r(x)}^{G}\right]
 \label{eq:regret}
\end{equation}
the finite-benchmark value and regret of router $r$ under $G$. Lower regret is better. Exactness is within-stack: it neither asserts that team rankings are invariant to replacing $G$ nor identifies stochastic deployment utility. The latter, $V_x^{\mathrm{pop},G}(S)=\mathbb E_\omega[y_{x,S,\omega}^{G}]$, requires repeated executions and is not identified by one provider call.

If two systems change their workers or finalizer as well as their route, the selection contrast is not identified: the same observed answers are compatible with a routing effect or with no routing effect and a changed downstream pipeline. In contrast, when all legal teams are observed under one fixed protocol, Equation~\ref{eq:regret} is a table lookup---no model of unseen coalitions is needed. If teams are missing, one may report sampled-action regret, but not an exact oracle claim without additional assumptions.

The complete-table lookup is not itself the theoretical contribution. The substantive design question is how much intervention support is necessary when full enumeration is unaffordable. COVER gives a sharp answer for frozen-policy comparison: the route union is minimal for relative contrasts, whereas the complete family is required for assumption-free absolute oracle regret. The experiments instantiate both regimes and show when confusing them changes a conclusion.

\paragraph{What we contribute.} We contribute (i) a stack-conditional evaluation contract and a minimal-support theorem separating absolute oracle regret from relative frozen-policy comparison; (ii) two controlled tables with source-ID-disjoint train/development/held-out splits; (iii) a fixed-stack execution decomposition; and (iv) a natural heterogeneous-tool validation that succeeds as a measurement while rejecting its frozen routing headline. COVER is a methodology paper. The routers are validation instruments, not a new universal architecture.

\paragraph{Reader's guide.} The main narrative is deliberately linear: Section~\ref{sec:method} identifies the estimand; Section~\ref{sec:experiments} evaluates two complete controlled tables; Section~\ref{sec:results} decomposes execution under one fixed stack; and Section~\ref{sec:natural} tests the endpoint in ToolSandbox. All other experiments are visibly supplemental.

\section{Related work: what COVER is and is not}

Model routers choose a complete model or scaffold using preference, similarity, or confidence \citep{ong2024routellm}. Multi-agent systems additionally choose roles, communication paths, and workers \citep{wu2023autogen,du2023debate,wang2024moa}. Their usual end-to-end comparison is valuable for product evaluation, but it need not answer the narrower question of coalition-selection quality under a common downstream protocol.

A contemporary post-audit example, Pandora's Router, asks whether the expected benefit of purchasing a more accurate specialist-value estimate justifies its inspection cost \citep{fisch2026pandora}. Pandora optimizes information acquisition and specialist allocation; COVER instead asks what coalition-selection contrast is identified once the information boundary and downstream stack are declared. Pandora v2 postdates the audit freeze and is therefore cited here without altering the frozen 30-paper counts.

\paragraph{A targeted audit of what ``routing'' denotes.} A frozen, purposive, non-systematic audit of 30 recent papers documents within this sample that ``routing'' spans joint-system design, communication topology, sequential policies, and team selection. These are different estimands, not a catalogue of invalid work. The coding rules, counts, single-coder limitation, and full paper-level table are confined to Appendix~\ref{app:routing-audit}. COVER targets the narrower case of a declared finite coalition family under one common downstream protocol.

Set encoders, submodular selection, DPPs, and correlation clustering provide useful mechanisms for scoring groups \citep{zaheer2017deepsets,lee2019settransformer,kulesza2012dpp,bansal2004correlation}. We use them only as validation-policy ingredients. COVER does not claim a new attention primitive, set-function class, or optimizer. Its target is instead the quantity a routing experiment has identified.

Finally, factorial and off-policy methods estimate contrasts for actions that are not all observed \citep{dasgupta2015factorial,rebello2023factored,shimizu2024combinatorial}. COVER occupies the complementary regime: a small, declared action family can be executed completely. This costs more up front, but makes the finite benchmark oracle observable rather than estimated under a surrogate model.

ToolLLM retrieves candidate APIs before use \citep{qin2023toolllm}, while DyLAN performs preliminary team optimization before task solving \citep{liu2023dylan}. Our natural-study comparators adapt these two selection ideas to a frozen finite coalition table. They are deliberately labeled ``ToolBench-style'' and ``DyLAN-style'': table lookup cannot reproduce either published system's training, prompting, or execution stack, and we do not present the adaptations as head-to-head replications.

\section{The evaluation contract}
\label{sec:method}

For every task, a valid COVER comparison fixes five things:
\begin{enumerate}[leftmargin=*,itemsep=2pt,topsep=2pt]
\item \textbf{Action space.} Declare which teams are legal before outcomes are generated.
\item \textbf{Information boundary.} The router receives task text and public cards; evidence, answers, required-team labels, and outcomes stay hidden.
\item \textbf{Downstream protocol.} Workers, ordering, communication, utility, and finalizer are identical across routers.
\item \textbf{Coverage.} Execute every legal team for absolute oracle regret, or the declared route union for relative frozen-policy contrasts.
\item \textbf{Inference.} Freeze routers before held-out comparison and resample tasks, not correlated worker pairs, for uncertainty.
\end{enumerate}

This contract is complementary to off-policy evaluation and learned surrogate approaches: those extrapolate to unobserved actions under assumptions; we instead evaluate a small declared action space completely. It is also different from model routing, which typically chooses a single model or a full scaffold rather than a coalition of evidence workers \citep{ong2024routellm,zaheer2017deepsets,lee2019settransformer}.

\subsection{Identification hierarchy and minimal support}

The distinction between a table and a deployment distribution matters. Let $O_x\subseteq\mathcal F_x$ be the teams that were actually executed under $G$ and let $m_x^G=\max_{S\in O_x}y_{x,S}^G$. If values lie in $[0,1]$, then for any router whose chosen team was observed,
\begin{equation}
 m_x^G-y_{x,r(x)}^G\ \leq\ \regret_x^{\mathrm{bench},G}(r)\ \leq\ 1-y_{x,r(x)}^G.
 \label{eq:bounds}
\end{equation}
If $O_x\subsetneq\mathcal F_x$, both endpoints are attainable by finite tables that agree on every observed outcome: assign every unseen team a value at most $m_x^G$ for the lower endpoint, or assign one unseen team value one for the upper endpoint. When $O_x=\mathcal F_x$, regret is exactly $m_x^G-y_{x,r(x)}^G$. Thus complete coverage is the generic route to exact benchmark regret under the declared $G$; with partial coverage, the only assumption-free exception is saturation at $m_x^G=1$. Structural, smoothness, or generative assumptions can support estimates with partial coverage, but should be reported as such.

Absolute regret is not always the required target. Let $\mathcal R=\{r_1,\ldots,r_K\}$ be frozen policies evaluated under the same $G$, and let $A_x(\mathcal R)=\{r_k(x):1\leq k\leq K\}$ be their distinct selected teams on task $x$.

\begin{theorem}[Minimal support for frozen-policy comparison]
\label{thm:minimal-support}
Suppose finite outcomes are otherwise unrestricted in $[0,1]$. If $|A_x(\mathcal R)|\geq2$, every pairwise contrast
\[
\Delta_x^G(r_i,r_j)=y_{x,r_i(x)}^G-y_{x,r_j(x)}^G
\]
is point-identified from execution support $O_x$ if and only if $A_x(\mathcal R)\subseteq O_x$. Consequently, the per-task route union is the unique inclusion-minimal assumption-free design for all pairwise contrasts and requires exactly $|A_x(\mathcal R)|$ executions. If all policies select the same team, every contrast is identically zero without execution. In neither case does relative identification recover absolute oracle regret unless the oracle is otherwise identified.
\end{theorem}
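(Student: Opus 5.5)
The plan is to formalize ``point-identified from $O_x$'' in the same way as for Equation~\ref{eq:bounds}. A contrast is identified exactly when it takes the same value on every finite outcome table $y\in[0,1]^{\mathcal F_x}$ that agrees with the observed values on $O_x$. Since outcomes are otherwise unrestricted, the unobserved coordinates $\{y_{x,S}^G:S\notin O_x\}$ range freely over $[0,1]$, independently of the observed ones. The proof then splits into sufficiency, necessity, minimality, and the two closing remarks.

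\textbf{Sufficiency.} If $A_x(\mathcal R)\subseteq O_x$, then both $y_{x,r_i(x)}^G$ and $y_{x,r_j(x)}^G$ are observed. Hence $\Delta_x^G(r_i,r_j)$ is a fixed function of observed data for every pair $i,j$.

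\textbf{Necessity.} Suppose some $S^\ast\in A_x(\mathcal R)\setminus O_x$. Because $|A_x(\mathcal R)|\ge 2$, there is a policy $r_j$ with $r_j(x)=T\neq S^\ast$, and a policy $r_i$ with $r_i(x)=S^\ast$.

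\begin{itemize}
\item Fix any table consistent with the observations.
\item If $T\in O_x$, its value is pinned at some $c$. Setting $y_{x,S^\ast}^G$ to $0$ and then to $1$ produces the contrasts $-c$ and $1-c$, which differ.
\item If $T\notin O_x$, then $S^\ast$ and $T$ are distinct free coordinates. Setting $(y_{x,S^\ast}^G,y_{x,T}^G)=(0,0)$ gives contrast $0$, and $(1,0)$ gives contrast $1$.
\end{itemize}

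In both cases two observationally equivalent tables disagree on $\Delta_x^G(r_i,r_j)$, so that contrast is not identified. This is the only step that needs care. The hypothesis $|A_x(\mathcal R)|\ge2$ is essential here: it guarantees that the unobserved selected team is actually compared against a \emph{different} team, since a self-contrast would be identically zero.

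\textbf{Minimality and uniqueness.} By the equivalence just shown, the assumption-free designs for all pairwise contrasts are exactly the supersets of $A_x(\mathcal R)$ within $\mathcal F_x$. This family has the unique inclusion-minimal element $A_x(\mathcal R)$ itself, which costs $|A_x(\mathcal R)|$ executions because the routed teams are distinct by definition.

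\textbf{Degenerate case.} If $|A_x(\mathcal R)|=1$, then $r_i(x)=r_j(x)$ for all $i,j$. Every contrast is then $y-y=0$ on every table, so it is identified with $O_x=\emptyset$.

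\textbf{Absolute regret.} To show that relative identification does not yield absolute oracle regret, I will reuse the construction behind Equation~\ref{eq:bounds}. Take $O_x=A_x(\mathcal R)\subsetneq\mathcal F_x$ with $m_x^G<1$. Assigning every unseen team a value at most $m_x^G$, versus assigning one unseen team the value $1$, gives two tables with identical observed data, hence identical contrasts, but different $\regret_x^{\mathrm{bench},G}(r_k)$. The qualifier ``unless the oracle is otherwise identified'' covers the exceptions: full coverage, saturation at $m_x^G=1$, or structural assumptions.
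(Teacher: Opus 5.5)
Your proposal is correct and follows essentially the same argument as the paper: sufficiency by lookup, then necessity via two observationally equivalent tables that differ only at the unobserved selected team $S^\ast$ while a competing selected team $T$ is held fixed, followed by minimality and the degenerate case. Your case split on whether $T\in O_x$ is correct but unnecessary, since the paper simply holds $y_{x,T}^G$ fixed in both tables. Your explicit treatment of the absolute-regret clause via the construction behind Equation~\ref{eq:bounds} is slightly more complete than the paper's proof, which leaves that clause to its separate identified-set result.
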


Sufficiency is direct lookup. For necessity, omit any selected team and two bounded tables can agree on every observed intervention while assigning different values to that team, changing at least one contrast with a policy selecting another team. The taskwise result also identifies averages, win/tie/loss counts, and policy rankings over a fixed evaluation set. Appendix~\ref{app:formal} gives the full proof and separates this result from moving-stack comparisons.

\begin{table}[t]
\centering
\small
\caption{Information visible at each point in a COVER evaluation.}
\label{tab:boundary}
\begin{tabularx}{\linewidth}{lXX}
\toprule
Stage & Available & Deliberately unavailable \\
\midrule
Route selection & task text, public cards, declared budget & private evidence, answers, outcome table, required team \\
Worker execution & selected worker's private evidence & other workers' private evidence \\
Finalization & canonical selected messages & router identity and unselected messages \\
Offline scoring & frozen table values and route files & any held-out training outcome \\
\bottomrule
\end{tabularx}
\end{table}

For genuine execution, full coverage is usually too expensive. Theorem~\ref{thm:minimal-support} justifies executing the deduplicated union of teams selected by the frozen policies under one protocol. This design identifies their paired differences with no model for unexecuted teams while making no execution-oracle claim. It is the estimand used below.

\begin{table}[t]
\centering
\small
\caption{COVER's compute/identification regimes. Costs are interventions per task; only the first regime identifies absolute finite-benchmark oracle regret.}
\label{tab:regimes}
\begin{tabularx}{\linewidth}{l r X}
\toprule
Coverage regime & Interventions/task & Identified quantity \\
\midrule
Complete finite family & all legal teams & exact finite-benchmark oracle regret \\
Precommitted sampled family & declared sampled teams & sampled-action regret or bounds, not an oracle \\
Frozen route union & distinct selected teams & all pairwise policy contrasts; inclusion-minimal \\
\bottomrule
\end{tabularx}
\end{table}

\section{Two controlled intervention tables}
\label{sec:experiments}

We instantiate the contract twice in multi-hop QA. The task and public cards are visible to a router; each selected worker receives only its own private evidence; a fixed symbolic finalizer scores whether the construction-defined required evidence is present. Source IDs are disjoint across train, development, and held-out sets. A streaming audit confirms exactly one positive team per held-out task in both binary tables (300/300 HotpotQA tasks and 500/500 MuSiQue tasks). Exhaustive regret is therefore exact for these constructed tables, but the outcome is a controlled evidence-selection measurement, not natural team utility or a claim about unrestricted agents.

\begin{table}[t]
\centering
\small
\caption{The two intervention-complete controlled instances. A task, rather than a card pair or intervention, is the unit of inference.}
\label{tab:benchmarks}
\begin{tabularx}{\linewidth}{l r r X}
\toprule
Instance & Held-out tasks & Legal teams/task & Construction purpose \\
\midrule
MuSiQue-12 & 500 & 220 size-three teams & Tests globally coherent components amid distractor components \\
HotpotQA-4 & 300 & 4 size-three teams & Tests direct team scoring without a partition decoder \\
\bottomrule
\end{tabularx}
\end{table}

\subsection{Policies are validation instruments}

\unified{} is a permutation-invariant direct scorer. It encodes the task and public cards with a frozen sentence encoder, forms task-conditioned card states using multiplicative and distance features, pools both the full worker set and a proposed team, and scores the team with a small MLP. It is trained on observed training-coalition outcomes using regression, ranking, and best-team losses. No positional worker identifier is provided.

\ptgst{} is intentionally different. It ranks a triple using a fixed local compatibility proxy and the quality of its best balanced partition completion, computed exactly by bitmask dynamic programming. Its pre-specified MuSiQue interface is trained on train-only latent component labels. We therefore classify it as a \emph{privileged positive control}, not a candidate deployment router: conditional on the same privileged interface, the matched graph contrast asks only whether the table detects structured global completion. A separate public interface learns compatibility from public text and training intervention outcomes; that control is closer to deployment, but retrospective.

\subsection{Pre-specified analysis}

The original confirmatory family comprises the MuSiQue privileged positive-control contrast, the HotpotQA public-interface comparison, and the Llama verified-evidence execution endpoint. We use paired task bootstrap intervals, two-sided sign-flip tests, and Holm correction within that family. ``Confirmatory'' here describes the frozen test, not deployment validity: the MuSiQue contrast confirms instrument sensitivity conditional on privileged supervision. ToolSandbox is a later natural validation. All other endpoints, controls, scale studies, comparators, provider replications, and failed or null gates are secondary and labeled as such; most appear only in the supplement and append-only ledger, while the retrospective public MuSiQue control and ToolSandbox comparator rows remain in the main text to expose the deployment-relevant comparison and make the natural null interpretable.

\begin{table}[t]
\centering
\small
\caption{Controlled findings and their evidential status. Regret is exact over each declared offline family conditional on its frozen stack; exact scoring does not make a privileged policy deployable or a retrospective policy confirmatory.}
\label{tab:main-results}
\begin{tabularx}{\linewidth}{l r r X}
\toprule
Study & Tasks & Result & Interpretation \\
\midrule
MuSiQue-12 public & 500 & 0.424 vs 0.554 & Deployment-relevant public interface; retrospective control \\
MuSiQue-12 privileged & 500 & 0.402 vs 0.532 & Pre-specified positive control of instrument sensitivity \\
HotpotQA-4 & 300 & 0.110 vs 0.313 & Direct set scoring is a useful bias in a four-way decision \\
\bottomrule
\end{tabularx}
\end{table}

\paragraph{MuSiQue-12: public control and privileged positive control.} Each pool has twelve permuted cards and 220 legal triples. The deployment-facing result is the public interface learned only from public task/card text and training intervention outcomes: partition decoding obtains 0.424 regret versus 0.554 for its matched graph, with no latent membership labels at training or test time. Because this interface was designed after the primary analysis, the result is retrospective and cannot replace a prospectively frozen public-policy validation. Separately, the pre-specified interface trained from latent component membership obtains 0.402 versus 0.532. Those labels are privileged benchmark-construction information. We retain that pair solely as a positive control showing that the outcome table is sensitive to global completion when the relevant local structure is supplied; it is not evidence for an externally useful router. Dependence, no-partition, and representation controls are reported in Appendix~\ref{app:results}.

\paragraph{HotpotQA-4: a direct-comparator stress test.} Each task has four workers and one must choose a size-three team. The direct baseline family is deliberately non-degenerate: a balanced multinomial leave-one-out classifier and an unrestricted two-hidden-layer team MLP both consume exactly the public task/card text and training-only best-full-team labels, and both can represent every one of the four decisions. Neither reads required roles, evidence, answers, semantic-route labels, or held-out outcomes. The leave-one-out classifier is the stronger held-out direct baseline (regret 0.3133; the MLP is 0.3500 at seed 29), so the reported \unified{} contrast is conservative: 0.1100 versus 0.3133, paired gain 0.2033 (CI [0.1533, 0.2567], sign-flip $p=.00001$; 69/223/8 wins/ties/losses). Low-order unary/pairwise baselines are degenerate under the strict proof gate because incomplete teams have value zero; they are therefore not the main comparator. This is evidence for an inductive bias and sample efficiency in a four-action decision---not evidence that direct set scoring already solves large coalition routing.

\begin{figure}[t]
\centering
\begin{tikzpicture}[x=.58cm,y=.60cm,font=\small,bar/.style={fill=blue!55},label/.style={anchor=east}]
\draw[->] (0,0)--(8,0) node[right]{mean benchmark regret (lower is better)};
\node[label] at (0,1.2) {MuSiQue graph}; \fill[gray!45] (0,0.9) rectangle (5.54,1.5); \node[right] at (5.54,1.2) {0.554};
\node[label] at (0,2.2) {MuSiQue partition}; \fill[bar] (0,1.9) rectangle (4.24,2.5); \node[right] at (4.24,2.2) {0.424};
\node[label] at (0,3.6) {Hotpot leave-one-out}; \fill[gray!45] (0,3.3) rectangle (3.13,3.9); \node[right] at (3.13,3.6) {0.313};
\node[label] at (0,4.6) {Hotpot direct set}; \fill[bar] (0,4.3) rectangle (1.10,4.9); \node[right] at (1.10,4.6) {0.110};
\end{tikzpicture}
\caption{Controlled public-interface comparisons. The MuSiQue pair is retrospective; the HotpotQA pair is pre-specified. Regret is exact within each declared action space; scales are not compared across instances.}
\label{fig:offline}
\end{figure}

\section{Execution decomposition under a fixed stack}
\label{sec:results}

Offline tables isolate selection exactly, but genuine worker messages are still useful as a diagnostic. We freeze both routes, execute their union with the same hosted workers and fixed finalizer, then compare only the selected routes. The oracle term cancels in this paired difference, so this identifies a relative route comparison even though it does not recover an execution oracle.

The pre-specified 300-task Llama study makes the key distinction visible: \unified{} transports verifier-complete evidence more often, but its raw exact answer advantage over leave-one-out is only one percentage point and its 95\% interval crosses zero. A better route can therefore be hidden by a weak finalizer; conversely, an end-to-end answer gain alone cannot prove a routing effect without the fixed-protocol control.

\begin{table}[H]
\centering
\small
\caption{Frozen genuine execution. Executed-route regret is calculated only on the union of the frozen selected routes, not over all possible teams.}
\label{tab:genuine}
\begin{tabular}{lrrrr}
\toprule
Family & Router & \shortstack{Verified evidence\\complete} & \shortstack{Raw answer\\exact} & \shortstack{Raw route\\regret} \\
\midrule
Llama & \unified{} & 81.67\% & 36.33\% & 0.050 \\
Llama & leave-one-out & 62.67\% & 35.33\% & 0.060 \\
\bottomrule
\end{tabular}
\end{table}

The verified evidence-complete Llama contrast is 0.190 executed-route-regret points (95\% CI [0.140, 0.240]; Holm $p=.00003$ within the original family). This endpoint records successful evidence/citation transport under the enforced verifier; it is not finalizer exact-answer success. The untouched raw-answer contrast is 0.010 (CI [$-0.0067$, 0.0267]). These numbers make the claim hierarchy concrete: verified evidence transport is the confirmatory execution result; raw-answer gains are conditional on the downstream configuration $G$.

That conditioning is substantive. The completed crossed $2\times2$ diagnostic shows that absolute execution values depend on the finalizer, while finding no detectable router-by-finalizer interaction in this two-finalizer panel (Table~\ref{tab:crossed-finalizer}; Appendix~\ref{app:results}). All four within-cell route advantages are positive, so the experiment does not demonstrate a routing-rank reversal; that remains a theoretical possibility under Proposition~1. The broader six-configuration panel remains incomplete. All provider replications and later follow-ups are disclosed in Appendix~\ref{app:results} and the ledger, but do not enter the main claim.

\section{Natural heterogeneous-tool validation}
\label{sec:natural}

The controlled QA tables identify their finite estimands, but their outcomes are constructed evidence coverage. We therefore ask whether the same protocol remains informative in a natural tool environment. The final ToolSandbox study exposes five heterogeneous public families---settings, date/utility, directory, messaging, and reminders---backed by official local tools. Its declared action family contains the five singletons, ten pairs, and the full five-worker team: 16 coalitions per task. A pinned Qwen2.5-14B-AWQ executor starts from fresh simulator state, and value is the fraction of official safe tool-state milestones completed. This decomposed endpoint attributes selection and execution separately from answer synthesis.

The development matrix combines 34 earlier tasks and contains all $34\times16=544$ outcomes. We then freeze a public capability router and evaluate an outcome-untouched 14-task within-family variant-shift validation. It is not scenario-family-disjoint replication. Every one of the $14\times16=224$ expected held-out rows is valid, and every task exhibits at least two distinct coalition values. The natural gate therefore passes its two prerequisite questions: the stack can use the tools, and changing the coalition changes the measured outcome.

\begin{table}[H]
\centering
\small
\caption{Five-family ToolSandbox variant-shift validation (14 tasks, 16 declared coalitions/task). Completion is the official safe-evidence endpoint; lower regret is better. ``Style'' rows are interface-level adaptations, not exact published-system reproductions.}
\label{tab:toolsandbox}
\begin{tabular}{lrrr}
\toprule
Policy or reference & Completion & Regret & Mean workers \\
\midrule
Per-task declared-family oracle & .768 & --- & --- \\
All workers & .655 & .113 & 5.00 \\
Development-best fixed team & .655 & .113 & 5.00 \\
Learned value router & .655 & .113 & 4.57 \\
ToolBench-style card retriever & .655 & .113 & 5.00 \\
Frozen public capability router & .637 & .131 & 3.00 \\
DyLAN-style trial selector & .482 & .286 & 2.50 \\
\bottomrule
\end{tabular}
\end{table}

The prospectively frozen capability router reaches 0.637 completion and regret 0.131 (95\% task-bootstrap CI [0.042,0.244]), failing the predeclared regret threshold of 0.10 and trailing all-workers by 0.018. The 0.10 criterion was fixed as a practical tolerance on the $[0,1]$ endpoint: a successful router could leave at most ten percentage points of mean safe-evidence completion below the declared-family oracle. This is a valid negative held-out result, not a failed run: all rows are present and the 0.113 gap between the declared-family oracle and all-workers shows genuine team-dependent headroom.

After the held-out summary had been inspected, we ran a no-API comparator suite on the already collected matrix. Every route is fitted using development outcomes only, serialized, and then scored by held-out-table lookup. The strongest five-fold-selected learned value router matches all-workers exactly at 0.655 while selecting the full team on 12 tasks and a pair on two, reducing mean team size from 5.00 to 4.57 without improving value. The development-best fixed team and ToolBench-style card retriever both reduce to all-workers; the DyLAN-style trial selector performs worse. None clears regret 0.10. Because the suite itself was specified after inspection of the held-out summary, only the comparator-suite rows in Table~\ref{tab:toolsandbox} are retrospective, even though their route fitting never reads held-out outcomes. They cannot retroactively rescue or replace the prospectively frozen router result.

This is the natural study's substantive finding: exhaustive evaluation reveals a 0.768 declared-family ceiling, but the later best development-frozen comparator leaves 0.113 value on the table. COVER succeeds here by distinguishing \emph{measurement headroom} from \emph{achieved routing performance}. A final-answer-only benchmark would additionally mix this gap with synthesis failures; a top-$k$ success metric would hide it behind ties. The decomposed finite table makes the unresolved routing problem visible.

\section{Artifacts and reproducibility}

Every controlled study is tied to a frozen source-ID manifest, held-out route file, task-level result file, and analysis command. The repository provides a sanitized intervention archive containing hashed task identities, generic coalition identifiers, and deterministic values. This archive is sufficient to re-score a supplied route file, but not to run a new public-information router by itself because source text and public cards are omitted for licensing reasons. To develop a new router, a researcher obtains the original datasets under their licenses, reconstructs the allowed public inputs with the released manifest builder, emits routes keyed by the frozen task hashes, and scores them against the archive.

The final ToolSandbox release contains the five-family development and held-out manifests, 544 development and 224 held-out task--coalition rows, the prospectively frozen capability-router decision file, and the retrospective offline baseline suite. The latter serializes its own method parameters and hashes before opening the held-out table. The released summary, freeze record, and 84 per-policy held-out routes are sufficient to recompute Table~\ref{tab:toolsandbox} without another model or API call. The earlier three-family 105-row table is retained in the study ledger as a capability pilot, not silently combined with or substituted for the final validation.

The hosted execution traces are not claimed to be perfectly replayable: a provider can change despite temperature-zero requests. We release compact per-task outcomes, prompts, route decisions, model identifiers, protocol hashes, and no-API commands for every calculation that does not require the provider. An append-only study ledger lists positive, null, failed-gate, diagnostic, incomplete, and infrastructure-invalid studies with their stopping rules. This is why the paper separates a finite deterministic table result from a stochastic population-utility claim.

\section{Discussion: when should one use COVER?}

COVER is most useful when a researcher can afford exhaustive outcomes for a meaningful but small team family, and wants to know whether a routing rule rather than a changed pipeline caused a measured effect. It is particularly suitable for evaluating a new selector against an established worker and finalizer stack. It is not the right tool for claiming exact oracle performance among hundreds of unconstrained tools. In that setting, a researcher should precommit a public sampling design, record inclusion probabilities, report sampled-action regret and its coverage, and test the sampler against full enumeration on a smaller overlap cohort.

Large spaces need not make every comparison unidentifiable: Theorem~\ref{thm:minimal-support} shows that the precommitted route union is exactly the minimal assumption-free support for all contrasts among the frozen policies, while leaving absolute oracle regret unidentified. Table~\ref{tab:regimes} states the general distinction; the stress-test implementation is supplemental.

The methodology also suggests a reporting norm. A paper should say whether its conclusion concerns (a) a finite controlled selection table, (b) evidence transport under fixed generation, or (c) final task success under stochastic execution. These are related but not interchangeable quantities. In our own evidence ladder, the Llama experiment demonstrates precisely why the distinction changes the conclusion.

The ToolSandbox result adds a second reporting norm: a declared-family oracle ceiling is not router performance. Here 0.768 is obtainable only by post-hoc per-task lookup over the 16 declared teams, whereas the prospectively frozen router scores 0.637 and a later retrospective development-only comparator scores 0.655. Reporting the oracle without achieved performance would turn treatment variation into a fictitious routing result; reporting only achieved performance would conceal that the environment contains exploitable heterogeneity. COVER requires both.

\section{Scope and conclusion}
\label{sec:conclusion}

COVER answers one narrow question well: within a declared finite action space and frozen stack, how much benchmark value does a router lose relative to the best team when only the selected coalition changes? Complete interventions, a strict information boundary, and frozen downstream execution make that question identifiable. The evidence spine is deliberately short: two controlled tables establish exact selection evaluation, one execution study separates evidence transport from synthesis, and the five-family ToolSandbox variant-shift validation establishes natural treatment variation while rejecting the frozen router headline.

The limits are equally important. The controlled outcomes are construction-defined; the pre-specified MuSiQue policy is only a privileged positive control, while its more deployment-relevant public-interface result is retrospective; the natural validation has only 14 within-family task variants and its retrospective baseline suite cannot be promoted to a prospective result; and a one-draw provider execution is not population utility. No result establishes stack-invariant, large-pool, or general agent-system superiority. The headline is methodological: COVER can identify a routing gain, separate it from synthesis, and---when the data demand it---show that substantial declared-family oracle headroom has not yet been converted into deployable routing performance. The supplement and append-only ledger retain all representation controls, alternative endpoints, scale studies, provider follow-ups, null gates, and incomplete panels so that this narrower narrative does not become selective reporting.

\bibliography{references}
\bibliographystyle{iclr2026_conference}

\clearpage
\appendix

\section{Formal identification details}
\label{app:formal}

\subsection{Why moving-pipeline comparisons are not selection comparisons}

Write $G$ for the full downstream protocol: public/private timing, worker behavior, communication, finalizer, and utility. For frozen routers $r_0,r_1$, the selection contrast under a common protocol is
\[
\Delta_x^G(r_1,r_0)=V_x^G(r_1(x))-V_x^G(r_0(x)).
\]

\begin{theorem}[Moving-pipeline non-identification]
Observing $V_x^{G_0}(r_0(x))$ and $V_x^{G_1}(r_1(x))$ does not identify $\Delta_x^{G_0}(r_1,r_0)$ when $G_0$ and $G_1$ may differ, even with arbitrarily many repeated observations of those two systems.
\end{theorem}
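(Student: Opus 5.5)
The plan is to exhibit two observationally equivalent worlds that assign different values to the target contrast. Non-identification means the map from the unknown structure (the full family of values $V_x^{G}(S)$ for all protocols and teams) to the observable distribution is not injective modulo the estimand. Concretely, it suffices to build two structures that produce the same law of $V_x^{G_0}(r_0(x))$ and $V_x^{G_1}(r_1(x))$ but disagree on $\Delta_x^{G_0}(r_1,r_0)$. Repeated observations then cannot help, because they only sharpen knowledge of that common law.

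\textbf{Step 1 (reduce to the interesting case).} Fix a task $x$. We may assume $r_1(x)\neq r_0(x)$, since otherwise the contrast is trivially zero and there is nothing to identify; the statement concerns the generic case. Write $S_0=r_0(x)$ and $S_1=r_1(x)$. The estimand is $V_x^{G_0}(S_1)-V_x^{G_0}(S_0)$. The observables depend on the structure only through $V_x^{G_0}(S_0)$ and $V_x^{G_1}(S_1)$, or through their stochastic versions $y_{x,S,\omega}^{G}$ if executions are random.

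\textbf{Step 2 (construct the two worlds).} Choose any observed values $a=V_x^{G_0}(S_0)$ and $b=V_x^{G_1}(S_1)$ in $[0,1]$.
\begin{itemize}
\item In world A, set $V_x^{G_0}(S_1)=a$, so the selection contrast is $0$ and the whole observed gap $b-a$ comes from the change of pipeline from $G_0$ to $G_1$.
\item In world B, set $V_x^{G_0}(S_1)=c$ for some $c\neq a$ in $[0,1]$, for example $c=b$ when $b\neq a$, or $c=1-a$ (or $c=1$ when $a=1/2$) otherwise. The contrast is then $c-a\neq0$.
\end{itemize}
All other team--protocol values can be filled arbitrarily and identically in both worlds. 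For the repeated-observation clause, specify the outcome distributions of the two executed systems identically in A and B, for instance by letting $y_{x,S_0,\omega}^{G_0}$ and $y_{x,S_1,\omega}^{G_1}$ have the same laws in both. The unexecuted cell $(S_1,G_0)$ has mean $a$ in A and $c$ in B. Since that cell never enters the data-generating process, the joint law of any number of i.i.d.\ repetitions is identical across worlds.

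\textbf{Step 3 (conclude and note the main obstacle).} Because the two worlds induce the same observable law but different values of $\Delta_x^{G_0}(r_1,r_0)$, no function of the data, including limits as the sample size grows, can recover the contrast. The only delicate point is legitimacy: the construction must respect whatever restrictions the paper places on outcomes. Under the same ``otherwise unrestricted in $[0,1]$'' assumption used in Theorem~\ref{thm:minimal-support}, the cell $(S_1,G_0)$ is a free coordinate, so both worlds are admissible. I would make this explicit by remarking that, with $G_1\neq G_0$, no structural link between protocols ties $V^{G_0}(S_1)$ to $V^{G_1}(S_1)$. I would also note that the result fails only under added invariance assumptions, such as $V^{G_0}=V^{G_1}$, which is exactly what a fixed-stack COVER evaluation enforces by design.
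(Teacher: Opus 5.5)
Your proposal is correct and follows the paper's argument. The paper likewise builds two observationally equivalent worlds that differ only in the unexecuted cell $V_x^{G_0}(r_1(x))$: one attributes the observed gap to the protocol change and gives zero contrast, and the other (your $c=b$ choice, i.e.\ the paper's $G_0=G_1$ world) attributes it to selection. Your version simply extends the paper's $0/1$ example to arbitrary observed values $a,b$, and it makes the repeated-observation clause explicit by giving the two executed systems identical outcome laws in both worlds.
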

\begin{proof}
Consider one task, teams $A,B$, $r_0(x)=A$, $r_1(x)=B$, and observed system outcomes $0,1$. In one compatible world $G_0=G_1$ and $V^{G_0}(A)=0,V^{G_0}(B)=1$, giving contrast one. In another, both teams have value zero under $G_0$ but $V^{G_1}(B)=1$ because the downstream protocol changed, giving contrast zero. The observations are identical.
\end{proof}

\begin{proposition}[Within-stack exactness does not imply stack invariance]
Even if every team in $\mathcal F_x$ is observed under $G_0$, neither the team ranking nor the regret of a policy under a different stack $G_1$ is identified without an assumption or observations linking $V_x^{G_0}$ and $V_x^{G_1}$.
\end{proposition}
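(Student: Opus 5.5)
The plan is to prove non-identification the same way as the moving-pipeline theorem: build two stack pairs $(V^{G_0},V^{G_1})$ that produce exactly the same observations but give different rankings and regrets under $G_1$. The key modeling step is to make precise what ``without an assumption linking $V_x^{G_0}$ and $V_x^{G_1}$'' means. I take it to mean that the two value functions $V_x^{G_0}:\mathcal F_x\to[0,1]$ and $V_x^{G_1}:\mathcal F_x\to[0,1]$ may be chosen independently, and that no outcome under $G_1$ is observed. The observed data are then the complete $G_0$ table $\{V_x^{G_0}(S):S\in\mathcal F_x\}$, and any $G_1$ values are compatible with them. Identification fails once two admissible $G_1$ tables give different rankings and different regrets for some fixed policy.

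The construction uses one task $x$ with $|\mathcal F_x|\ge 2$; if there is only one legal team, the ranking is trivial, so I assume at least two. Pick teams $A\ne B$ in $\mathcal F_x$ and fix any observed $G_0$ table, say $V^{G_0}(A)=1$, $V^{G_0}(B)=0$, and all other teams $0$.

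World 1 sets $V^{G_1}=V^{G_0}$, which corresponds to $G_1$ behaving like $G_0$.

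World 2 swaps the values: $V^{G_1}(A)=0$, $V^{G_1}(B)=1$, and all other teams $0$.

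Both worlds agree on every observed quantity, because the $G_0$ table is the same in each. Under $G_1$, however, the ranking of $A$ and $B$ reverses. Take the policy $r(x)=A$. By Equation~\ref{eq:regret}, its benchmark regret under $G_1$ is $\max_S V^{G_1}(S)-V^{G_1}(A)$. This is $0$ in World 1 and $1$ in World 2. So neither the team ranking nor the regret under $G_1$ is identified. For a benchmark $\tasks$ with several tasks, I would apply the same construction on one task and leave every other task identical across the two worlds; the averaged regret then differs by $1/|\tasks|$.

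The step I expect to need the most care is stating the hypothesis honestly: I must check that the swapped World 2 table is a legitimate stack-$G_1$ outcome. Under the paper's standing convention that finite outcomes are otherwise unrestricted in $[0,1]$, as in Theorem~\ref{thm:minimal-support}, it is. Otherwise the proof would need an explicit downstream change that realizes the swap, for instance a finalizer under $G_1$ that processes the messages in a different way. I would also remark that the argument is unchanged if some $G_1$ outcomes are observed, as long as the team whose value is perturbed is unobserved under $G_1$; this mirrors the bound argument around Equation~\ref{eq:bounds}.
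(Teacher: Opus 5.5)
Your proof is correct and is essentially the paper's own argument: a complete $G_0$ table with $V_x^{G_0}(A)=1$, $V_x^{G_0}(B)=0$ is compatible with a $G_1$ table that preserves this ordering and with one that reverses it. You additionally make explicit the resulting regrets ($0$ versus $1$ for $r(x)=A$) and the $1/|\tasks|$ effect on the benchmark average. Only your closing aside on partially observed $G_1$ outcomes is loose: the swap changes both $A$ and $B$, so both would need to be unobserved, or you would need a non-saturation condition on the observed $G_1$ maximum as in the paper's identified-set theorem. The main argument does not depend on that aside.
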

\begin{proof}
For two teams $A,B$, a complete $G_0$ table with $V_x^{G_0}(A)=1$ and $V_x^{G_0}(B)=0$ is compatible both with a $G_1$ table preserving that ordering and with one reversing it. Both worlds agree on every $G_0$ intervention. Thus exhaustive coverage identifies a table conditional on $G_0$, not transport to $G_1$.
\end{proof}

\paragraph{Panel summaries do not create invariance.}
For a predeclared finite panel of stacks $\mathcal G$, sensitivity can be summarized without collapsing the stack index. For example,
\[
\overline{R}_{\mathcal G}(r)=|\mathcal G|^{-1}\sum_{G\in\mathcal G}\regret^{\mathrm{bench},G}(r),\qquad
s^2_{\mathcal G}(r)=(|\mathcal G|-1)^{-1}\sum_{G\in\mathcal G}\bigl(\regret^{\mathrm{bench},G}(r)-\overline{R}_{\mathcal G}(r)\bigr)^2,
\]
together with pairwise policy-rank reversals across $G$. These quantities describe the observed panel only: a mean is not a stack-free causal score, and neither its variance nor pairwise agreement identifies transport to an unobserved finalizer. Our completed $2\times2$ worker-message/finalizer study is such a conditional sensitivity diagnostic. The broader six-configuration panel is incomplete and is not used to estimate these summaries.

\begin{theorem}[Exact identified set for a finite action table]
Fix $G$. Let $O_x\subseteq\mathcal F_x$ be observed interventions, all values lie in $[0,1]$, $r(x)\in O_x$, and $m_x^G=\max_{S\in O_x}y_{x,S}^G$. If $O_x=\mathcal F_x$, then
\[
\regret_x^{\rm bench,G}(r)=m_x^G-y_{x,r(x)}^G.
\]
If $O_x\subsetneq\mathcal F_x$, the identified set of $\regret_x^{\rm bench,G}(r)$ over all bounded finite tables compatible with the observations is exactly
\[
\left[m_x^G-y_{x,r(x)}^G,\;1-y_{x,r(x)}^G\right].
\]
Thus, with at least one unobserved team, exact nonparametric regret is point-identified if and only if $m_x^G=1$.
\end{theorem}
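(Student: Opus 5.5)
I will prove the statement in three parts: the complete-coverage formula, the identified set under partial coverage, and the point-identification criterion. Throughout, $G$ and the task $x$ are fixed, so the per-task regret is the deterministic functional
\[
\regret_x^{\rm bench,G}(r)=\max_{S\in\mathcal F_x}y_{x,S}^G-y_{x,r(x)}^G
\]
of the finite table $(y_{x,S}^G)_{S\in\mathcal F_x}$. Since $r(x)\in O_x$, the subtracted term is always observed. Only the maximum $M:=\max_{S\in\mathcal F_x}y_{x,S}^G$ can be uncertain, so everything reduces to the identified set of $M$.

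\textbf{Complete coverage.} If $O_x=\mathcal F_x$, then $M=m_x^G$ by definition, and the formula is a direct lookup.

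\textbf{Partial coverage.} Let $U=\mathcal F_x\setminus O_x\neq\emptyset$. I will take the class of compatible worlds to be all tables $y'\in[0,1]^{\mathcal F_x}$ that agree with $y$ on $O_x$.

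For containment, any compatible table satisfies $M'\geq m_x^G$, since $M'$ maximizes over a superset of $O_x$. It also satisfies $M'\leq1$, since all values lie in $[0,1]$. Hence every compatible regret lies in $[m_x^G-y_{x,r(x)}^G,\,1-y_{x,r(x)}^G]$.

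For attainment, fix $t\in[m_x^G,1]$ and build the table $y^{(t)}$:
\begin{itemize}
\item it copies the observed values on $O_x$;
\item it assigns value $t$ to one chosen $S^\star\in U$;
\item it assigns value $0$ (or anything at most $t$) to every other unseen team.
\end{itemize}
This table is valid, lies in $[0,1]$, and is compatible with the observations. Its maximum is exactly $\max(m_x^G,t)=t$. So every point of the interval, not only the two endpoints that Equation~\ref{eq:bounds} mentioned, is attained, and the identified set is the full closed interval. This step, filling in the interior rather than just the endpoints, is the only place needing slight care. It is where I use that outcomes are otherwise unrestricted and that $U$ is nonempty.

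\textbf{Point-identification criterion.} When $U\neq\emptyset$, the identified set is an interval of length $1-m_x^G$. It is therefore a singleton if and only if $m_x^G=1$. In that case the regret equals $1-y_{x,r(x)}^G$ for every compatible table, so it is point-identified.

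The main conceptual obstacle is being explicit about what ``identified'' means. I will define it as: the regret is constant over all compatible tables. With that definition, both the ``exactly'' and the ``if and only if'' claims follow from the containment and attainment arguments above, with no probabilistic model required.
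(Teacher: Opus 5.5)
Your proof is correct and takes the same route as the paper's: the observed maximum gives the lower endpoint, the $[0,1]$ range gives the upper one, completions that agree on $O_x$ attain the bounds, and the criterion $m_x^G=1$ follows from the interval's length. Your one-parameter completion (value $t\in[m_x^G,1]$ on one unseen team, at most $t$ elsewhere) also fills a small gap: the paper's proof exhibits only the two endpoints and leaves implicit that every interior value is attained, which the word ``exactly'' requires.
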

\begin{proof}
When all actions are observed, the displayed equality is the definition of finite-table regret under $G$. Otherwise, the observed maximum supplies the lower endpoint and the range supplies the upper endpoint. Assign every unobserved team a value at most $m_x^G$ to attain the lower endpoint, or assign one unobserved team value one to attain the upper endpoint. Both completions agree on $O_x$. The endpoints coincide exactly when $m_x^G=1$.
\end{proof}

\subsection{Minimal route-union support}

\begin{proof}[Proof of Theorem~\ref{thm:minimal-support}]
Write $A=A_x(\mathcal R)$. If $A\subseteq O_x$, then every $y_{x,r_k(x)}^G$ is observed, so every pairwise difference is a lookup. Now suppose $|A|\geq2$ and some $S^\dagger\in A$ is absent from $O_x$. Choose $T\in A\setminus\{S^\dagger\}$. Construct two outcome tables that agree on every observed intervention and on every unobserved intervention except $S^\dagger$, but set $y_{x,S^\dagger}^G=0$ in one and $y_{x,S^\dagger}^G=1$ in the other. Hold $y_{x,T}^G$ fixed across the two tables. The contrast between a policy selecting $S^\dagger$ and one selecting $T$ therefore differs across two observationally equivalent tables and is not point-identified. Hence every sufficient support contains $A$, while $A$ itself is sufficient, making it the unique inclusion-minimal support. If $|A|=1$, all policies choose the same action and every pairwise difference is zero by definition.
\end{proof}

For two frozen routes, define $U_x(r)$ as the observed endpoint for $r(x)$. Route-union regret uses only the two executed values, and its difference cancels the route-union oracle:
\[
\bigl[\max(U_x(r_0),U_x(r_1))-U_x(r_0)\bigr]-
\bigl[\max(U_x(r_0),U_x(r_1))-U_x(r_1)\bigr]
=U_x(r_1)-U_x(r_0).
\]
This is a relative contrast over frozen policies, not absolute regret against $\max_{S\in\mathcal F_x}y_{x,S}^G$. The same support result extends to any finite $K$ and automatically deduplicates policies that choose the same team.

\section{Benchmark construction and information controls}
\label{app:construction}

\subsection{MuSiQue-12}

Each held-out MuSiQue task provides a target multi-hop evidence component. Three non-target components from source-disjoint donor examples are added to form a twelve-card pool, and card order and superficial labels are permuted. A legal action is any size-three coalition, giving $\binom{12}{3}=220$ teams per task. The offline finalizer is fixed and proof based: it receives the selected worker evidence in canonical order and scores the evidence units recovered by the team. This creates a controlled complementarity stress test; it does not claim that natural agent pools come pre-partitioned into balanced triples.

The train-only component membership labels used by the pre-specified Partition-COVER positive control are deliberately treated as privileged construction supervision. They never enter held-out routing, but they are not a naturally available deployment label. To test whether such labels are indispensable, a secondary post-hoc control derives pair pseudo-labels only from complete \emph{training} coalition outcomes (a pair is positive if it occurs in an optimal training triple) and learns $q_{ij}$ from public task/card text. It uses no component IDs, private evidence, or held-out outcomes. This public interface obtains regret 0.4240, versus 0.4540 after removing only completion and 0.5540 for its matched public graph; it is evidence that an interface can be learned from prior intervention feedback, not a preregistered deployment result.

\subsection{HotpotQA-4}

Each HotpotQA task exposes four public worker cards and routes exactly three workers. Private worker evidence is constructed so that the finalizer must combine the selected messages to recover the answer. The four legal actions are the four possible omissions. This deliberately small action space permits exhaustive held-out outcomes and direct full-team baselines. It should be read as a sample-efficiency/inductive-bias test, rather than a representation-theoretic separation from a classifier that can enumerate all four actions.

\subsection{Fresh Public-Medium-33 protocol}
\label{app:public-medium}

The machine-readable protocol and target manifest were committed as \texttt{5889e17} before the evaluator was first run. The statistical units are 25 unused target source IDs in each split. The target IDs are fresh relative to prior MuSiQue history; globally unused donor allocation was infeasible after the historical source audit, so donor banks are instead target-disjoint and train/heldout-disjoint, with this limitation recorded in the manifest. Each target receives ten donor chains, producing 33 permuted public cards. A card's synthetic budget weight is a deterministic hash of its public description; all teams of sizes one through four with total weight at most 12 are legal. These weights are feasibility proxies, not measured resource costs. The router sees task text, card text, and weights only. It learns pair targets from maximum observed \emph{training}-team coverage and then fits a fixed public coalition regressor; the baseline receives the same pair interface but scores only pairwise/relevance terms. Required roles are used only to create training table targets and to score held-out teams. The run-all-25 stopping rule, bootstrap, sign-flip test, and no-rerun condition were frozen in the protocol. The null result is therefore not removed, reseeded, or reinterpreted as an unreported tuning failure.

\begin{table}[h]
\centering
\small
\caption{Public/private boundary used in both controlled instances.}
\begin{tabularx}{\linewidth}{lXX}
\toprule
Field & Router at selection time & Worker/finalizer after selection \\
\midrule
Question and public capability cards & yes & yes \\
Private passages and evidence units & no & selected worker only \\
Gold answer, required team, component ID & no & no \\
Held-out intervention value & no & evaluation only \\
Finalizer identity and canonical message order & fixed, not routed & fixed \\
\bottomrule
\end{tabularx}
\end{table}

\subsection{Binary and graded values}

The primary offline endpoint awards a complete proof. To test whether the conclusion is purely an all-or-nothing artifact, we re-score the frozen same intervention tables by the unweighted fraction of required evidence units recovered. No route, model, or hand-chosen evidence weight changes under this re-scoring. The direction of the two main offline comparisons remains positive; these graded results are secondary endpoint-sensitivity evidence, not additional primary discoveries.

\section{Validation-policy implementation details}
\label{app:implementation}

\subsection{Unified-COVER}

Unified-COVER uses a frozen MiniLM sentence encoder for task and public-card embeddings. For a task embedding $t$ and card embedding $v_i$, a learned projection receives $[t,v_i,t\odot v_i,|t-v_i|]$. Self-attention pools card states for the full available pool and a candidate team; invariant sums and maxima are concatenated with these pooled states and passed to a small MLP. No worker position or hidden role label is supplied. Training uses observed train-coalition values with regression, pairwise ranking, and best-team listwise terms. On HotpotQA it scores all four legal teams directly.

\subsection{Partition-COVER}

Partition-COVER is a structured decoder, not a second learned set-value model. A logistic text interface fitted on 120 training tasks maps public task/card pair features to a bounded compatibility score $q_{ij}$. The pre-specified positive-control interface fits this map using latent component-membership targets; the retrospective public interface instead derives targets from training coalition outcomes. At held-out routing time both consume only public text, but only the latter has a deployment-plausible training signal. The decoder uses a fixed monotone logit transform of clipped $q_{ij}$, task relevance, and path coherence to score triples, then applies exact bitmask dynamic programming over balanced partitions. Each matched graph receives the same interface and relevance. Hence each within-interface contrast is about global completion conditional on shared local information, not better edge prediction. The task relevance/path weights $(8,1)$ were selected on development and frozen before held-out evaluation.

The decoder runs in $O(2^n n^2)$ time and $O(2^n)$ memory. It is practical for $n=12$ but intentionally not claimed as a hundred-worker algorithm. If natural coalitions have unequal sizes, the balanced feasible family can exclude the optimum; variable-size partitioning remains future work.

\section{Extended results and execution provenance}
\label{app:results}

\begin{table}[h]
\centering
\scriptsize
\caption{Originally specified offline comparisons. Lower regret is better. Holm correction applies to these three tests; the MuSiQue row is a privileged positive control, not deployment evidence.}
\begin{tabular}{lrrrr}
\toprule
Study & Comparator & COVER & Improvement & Status \\
\midrule
MuSiQue-12 binary & calibrated graph 0.5320 & 0.4020 & 0.1300 & positive control \\
HotpotQA-4 binary & leave-one-out 0.3133 & 0.1100 & 0.2033 & primary \\
Llama verified route regret & leave-one-out 0.2900 & 0.1000 & 0.1900 & primary \\
\bottomrule
\end{tabular}
\end{table}

\begin{table}[h]
\centering
\small
\caption{Attribution and robustness checks. All rows are secondary and unadjusted.}
\begin{tabularx}{\linewidth}{lXX}
\toprule
Check & Result & Interpretation \\
\midrule
  Same local score, no partition & 0.4660 vs 0.4020; gap 0.0640, CI [0.0400,0.0880] & global completion contributes beyond fixed local scores \\
  Graded evidence coverage & both offline advantages remain positive & conclusion is not only a proof-gate effect \\
  Outcome-supervised public interface & partition 0.4240 vs graph 0.5540; no-partition 0.4540 & train outcomes and public text suffice; retrospective, no latent membership labels \\
  Zero-shot Llama public router & MuSiQue regret 0.4860 (500 tasks) & external practitioner comparator; between public partition and public graph \\
GPT-OSS strict 300-task rerun & raw gain 0.0433, CI [0.0133,0.0733] & secondary matched family support \\
GPT-OSS source-disjoint 1k & raw gain 0.0870, CI [0.0640,0.1110] & late prospective provider-pipeline confirmation \\
\bottomrule
\end{tabularx}
\end{table}

\subsection{Dependence, representation, and stack controls}

Some composite MuSiQue targets share internal source-component IDs. Grouping the 500 tasks into 201 connected component-sharing clusters (largest 20 tasks) leaves the pre-specified positive-control gain of 0.1300 positive under cluster bootstrap, CI [0.0968,0.1634]. This is a dependence sensitivity, not a claim that every latent component is independent.

On HotpotQA, an unrestricted direct four-way classifier receiving the frozen MiniLM task/card embeddings and the same seven-dimensional public role vectors has regret 0.4467. Changing only the set scorer's static role prior gives regret 0.1233 for original profiles, 0.1167 for one-hot identity, 0.1167 for deterministically permuted profiles, and 0.7500 when all workers share one constant profile. Thus the hand-authored semantic coordinates do not explain the advantage, while distinguishable worker identity is necessary in this construction. These post-hoc controls do not overwrite the frozen primary 0.1100 result.

\begin{table}[h]
\centering
\scriptsize
\caption{Completed 300-task $2\times2$ worker-message/finalizer diagnostic. Values are raw exact accuracy; gain is Unified-COVER minus leave-one-out with a task-bootstrap 95\% interval.}
\label{tab:crossed-finalizer}
\begin{tabular}{llrrr}
\toprule
Worker messages & Finalizer & Leave-one-out & Unified-COVER & Gain [95\% CI] \\
\midrule
Llama & Llama & .4833 & .5000 & .0167 [$-.0067$,.0400] \\
Llama & GPT-OSS & .5767 & .6000 & .0233 [$-.0067$,.0533] \\
GPT-OSS & Llama & .4833 & .5267 & .0433 [.0133,.0733] \\
GPT-OSS & GPT-OSS & .5533 & .5900 & .0367 [.0033,.0700] \\
\bottomrule
\end{tabular}
\end{table}

Absolute accuracy changes across $G$. Switching finalizers changes the route advantage by $0.0067$ for Llama messages (CI [$-0.0200$,0.0333]) and $-0.0067$ for GPT-OSS messages (CI [$-0.0333$,0.0200]). Both interactions include zero, but this two-finalizer panel cannot establish invariance. The predeclared six-configuration panel remains incomplete and contributes no confirmatory claim.

\subsection{Natural ToolSandbox provenance and offline comparator suite}
\label{app:toolsandbox-suite}

The natural validation proceeded in stages rather than as one selectively reported run. An initial three-family capability study completed a 42-row gate, 133-row training table, and 105-row family-disjoint held-out table. It established that official milestone scoring varied across coalitions, but its coarse grouping made the full three-worker team a difficult comparator to improve upon. We retained that result and expanded the representation to five separately named public families. The final validation action set is not all $2^5-1$ nonempty teams: it is the 16 predeclared singletons, pairs, and full team. This keeps exhaustive execution manageable while testing both specialist and cross-family collaboration.

The five-family development table contains 34 complete tasks and 544 valid rows. The public capability router was frozen before the 14-task variant-shift validation and has freeze hash \texttt{5e15a4e28884dc6a}. The held-out manifest hash is \texttt{e1145424b77c8c5f}; all 224 rows use ToolSandbox commit \texttt{165848b9a78cead7}. The panel consists of untouched scenario variants from represented families, not new scenario families, and its claim boundary was recorded before scoring: all rows must be valid and mean regret must not exceed 0.10. The threshold encodes a ten-percentage-point practical tolerance below the declared-family oracle on the mean $[0,1]$ safe-evidence endpoint. Coverage passed; regret did not.

The later offline suite evaluates four development-only comparators. The fixed baseline chooses the highest mean-value development team. The learned baseline selects between multi-output ridge value prediction and direct team classification by five-fold development regret; ridge with $\alpha=3$ wins. The ToolBench-style adaptation retrieves public worker cards with a TF--IDF interface whose vocabulary and inverse-document frequencies are fitted only on cards and development requests; development selection chooses $k=5$. The DyLAN-style adaptation averages coalition values from the nearest development trials and applies a fixed team-size penalty; development selection chooses eight neighbors and penalty 0.05. These are COVER-compatible abstractions of retrieve-then-use and preliminary team optimization, not executions of the published systems.

The suite writes its specification before reading held-out rows. Its freeze hash is \texttt{b48ebe8d18d96254}; an independent standard-library audit resolves every selected team directly against the raw matrix and reproduces all 84 policy--task values. This outcome isolation prevents held-out-label leakage during fitting. It does not undo the higher-level provenance fact that we chose the comparator suite after seeing the held-out summary, so all baseline-suite results remain retrospective and unadjusted.

\subsection{Zero-shot LLM-as-router comparator}

To test whether the policies are distinguished only from bespoke baselines, we queried a zero-shot \texttt{llama-3.3-70b-versatile} router once per held-out task. It received exactly the task prompt and public capability cards, was prohibited from using private evidence, required teams, or intervention values, and emitted one size-three team before table scoring. On MuSiQue-12 it reaches 0.4860 mean regret and a 51.4\% oracle-hit rate (500 tasks): between the retrospective public graph (0.5540) and public Partition-COVER control (0.4240). This is a fixed-prompt, post-hoc external comparator rather than a tuned or confirmatory baseline. On HotpotQA-4 it obtains 0.7500 regret, which is uninformative under the strict proof gate: all incomplete teams share value zero. The route files and exact prompt are released with the supplementary artifacts.

\subsection{Partition-free medium-pool protocol}
\label{app:medium-pool}

The medium-pool study freezes a deterministic 100-task development / 400-task split of the existing 500-task MuSiQue target before fitting its residual coalition scorer. It uses linear evidence coverage and evaluates every independent size-three team in pools of 8, 10, or 12 public cards. Completion, balanced components, and latent membership labels are absent at inference. Table~\ref{tab:medium-pool} reports the frozen secondary reanalysis; all values are exact benchmark regret on the declared action family, but the target is not independent of the primary MuSiQue study. The constrained companion permits teams of sizes 1--3 under fixed synthetic budget weights.

\begin{table}[h]
\centering
\scriptsize
\caption{Frozen partition-free secondary reanalysis (400 existing-target tasks; lower linear-coverage regret is better).}
\label{tab:medium-pool}
\begin{tabular}{lrrrrrrr}
\toprule
Pool / teams & Graph & Pairwise & Local & Same-18D HGB & Same-18D Ridge & COVER & Gain/local \\
\midrule
8 / 56 & .2133 & .2108 & .1908 & .3300 & .3375 & \textbf{.1750} & .0158 \\
10 / 120 & .2833 & .2700 & .2367 & .3783 & .3792 & \textbf{.2200} & .0167 \\
12 / 220 & .3283 & .3100 & .2867 & .4133 & .3933 & \textbf{.2508} & .0358 \\
\bottomrule
\end{tabular}
\end{table}

At 12 cards, calibrated-graph regret is 0.3283 and the local-score gain remains 0.0358 under component-cluster resampling (190 clusters; CI [0.0143,0.0576]); the 0.3375 entry in Table~\ref{tab:medium-pool} is the 8-card same-feature Ridge control. Gains over same-feature HGB and Ridge at 12 cards are 0.1625 and 0.1425. At proxy budgets 6, 9, and 12, respectively 55, 148, and 246 teams of sizes 1--3 are legal. COVER obtains regrets 0.1425, 0.1742, and 0.2533. The weights are not operational cost measurements. These secondary analyses broaden the controlled action space, not the claim to natural large-agent deployment or cost efficiency.

\subsection{Large-space route-union stress test}

In a separately frozen 48-card stress test with all teams of sizes 1--6 ($14{,}196{,}868$ legal actions/task), we execute only the union of two frozen routes plus fixed stratified background teams: 49.48 actions/task, or $3.49\times10^{-6}$ of the declared family. The structural route recovers 0.3414 more required-evidence mass (CI [0.2930,0.3898], sign-flip $p<10^{-4}$) than the fixed public random route. This identifies that paired effect exactly; it does not identify either route's large-space oracle regret.

\subsection{Post-Llama raw-output provenance}

The GPT-OSS 1k study was not the only activity after the pre-specified Llama raw endpoint was inconclusive. To avoid a file-drawer reading, Table~\ref{tab:provenance} inventories every retained post-Llama raw route comparison. The 1k study is source-disjoint, matched to the original Unified-COVER/leave-one-out comparison, and frozen before its own execution, but remains late prospective and is deliberately kept out of the main evidence spine.

\begin{table}[h]
\centering
\scriptsize
\caption{Retained post-Llama raw-output provenance. All $p$ values are unadjusted.}
\label{tab:provenance}
\begin{tabularx}{\linewidth}{lrrX}
\toprule
Activity & Tasks & Raw gain & Status \\
\midrule
Original-answer MuSiQue, Llama & 300 & 0.1133 [0.0700,0.1567] & secondary, different construction/comparator \\
Original-answer MuSiQue, GPT-OSS & 140 & 0.0786 [0.0143,0.1500] & outcome-blind quota checkpoint \\
Strict HotpotQA GPT-OSS rerun & 300 & 0.0433 [0.0133,0.0733] & secondary same-task replication \\
Source-disjoint HotpotQA GPT-OSS & 1,000 & 0.0870 [0.0640,0.1110] & late prospective confirmation \\
Repeated GPT-OSS subset & 120 $\times$ 2 repeats & 0.0694 [0.0139,0.1278] & secondary stability check \\
\bottomrule
\end{tabularx}
\end{table}

\subsection{Repeated execution}

On a subsequently frozen, evidence-pattern-stratified 120-task subset of the 1k study, two additional executions per selected task--team yield $m=3$ raw accuracy 47.22\% for Unified-COVER versus 40.28\% for leave-one-out. The task-clustered raw route-regret improvement is 0.0694 (CI [0.0139, 0.1278]; sign-flip $p=.0232$). Raw success changes across three draws for 10.83\% of Unified-COVER tasks and 9.17\% of leave-one-out tasks. This supports repeat stability for this temperature-zero provider pipeline; it does not identify general population utility.

\section{Reproduction checklist}
\label{app:repro}

The release contains source-ID manifests, route files, compact per-task predictions, aggregate summaries, paired tests, SHA-256 checksums, and scripts for each no-API computation. The following checks reproduce the central claims:
\begin{enumerate}[leftmargin=*,itemsep=2pt]
\item verify the intervention archive checksum and route-file hashes;
\item reconstruct the MuSiQue and HotpotQA offline tables under the licensed source datasets;
\item score frozen routes and recompute task-level bootstrap and sign-flip statistics;
\item recompute graded coverage from the same tables without changing routes;
\item evaluate a supplied new route file against the sanitized table; and
\item verify the five-family ToolSandbox development and confirmation hashes, the 544/224 matrix counts, the frozen capability-router hash, and all 84 routes in \texttt{outputs/toolsandbox\_public\_baseline\_suite\_v1\_routes.csv};
\item rerun \texttt{scripts/evaluate\_toolsandbox\_public\_baseline\_suite.py} without model access and verify freeze hash \texttt{b48ebe8d18d96254};
\item inspect \texttt{manifests/public\_medium\_pool\_protocol\_v1.json} at commit \texttt{5889e17}, then reproduce its one permitted evaluation with \texttt{scripts/evaluate\_public\_medium\_pool\_v1.py}; and
\item compile this manuscript with the included portable Tectonic configuration.
\end{enumerate}

Source-bearing questions, passages, model transcripts, provider metadata, and credentials are excluded from the public archive. A researcher who wishes to run a new public-information policy must obtain the original datasets under their licenses, reconstruct public inputs locally, and emit hash-keyed routes. This is a licensing constraint, not a claim that the sanitized archive alone can execute arbitrary new routers.

\section{Frozen targeted routing-estimand audit}
\label{app:routing-audit}

The audit in Section~2 is a targeted, non-systematic snapshot, frozen on 2026-08-19. The corpus was assembled purposively to cover 2023--2026 work around multi-agent routing, team/role selection, communication topology, dynamic orchestration, architecture search, context routing, and learned collaboration. It was not produced by screening a complete database export. We included papers whose declared primary method selects, learns, adapts, or searches a multi-agent team, role assignment, communication/context route, topology, workflow, or architecture; surveys, general frameworks without such an intervention, and static ensembles outside that scope were excluded.

The unit of coding is the paper's declared primary intervention: \textbf{I} denotes a relatively component-aligned communication/topology intervention; \textbf{T}, direct team or role selection; \textbf{S}, a sequential routing policy whose decision changes future interaction state; and \textbf{J}, joint-system design over multiple dimensions such as roles, models, prompts, tools, topology, or workflow. The coding order was S for state-changing sequential routing, T for direct team/role assignment, I for a relatively isolated communication/context/topology object, and J for a coupled multi-dimensional design object. These mutually exclusive labels describe estimands, not methodological quality. In particular, a J result can be exactly the right end-to-end quantity for a broad system-design objective, and an I result can be a carefully controlled component analysis.

The frozen, machine-readable corpus and annotations are released as \texttt{routing\_estimand\_audit\_30.json}. It records the canonical paper titles, optimized object, code, and audit note below; a deterministic validator checks record count, allowed codes, unique IDs, and consistency of the reported totals. The corpus was coded by one author, without independent second coding, inter-rater reliability, a retained complete candidate-screening count, or a database-by-database query log. The counts are therefore descriptive only and are not an estimate of field-wide prevalence. The corpus contains no paper that materializes an exhaustive held-out value table over a declared finite coalition family under one common downstream protocol and reports finite coalition-selection oracle regret. This is a scoped absence claim about these 30 papers, not a literature-wide nonexistence claim or a claim that other routing objectives are invalid.

\begin{center}
\small
\begin{tabular}{lrrrr}
\toprule
Code & J & I & S & T \\
\midrule
Papers & 16 & 8 & 4 & 2 \\
Share & 53.3\% & 26.7\% & 13.3\% & 6.7\% \\
\bottomrule
\end{tabular}
\end{center}

\renewcommand{\arraystretch}{0.94}
\footnotesize
\begin{longtable}{r p{0.31\linewidth} c p{0.49\linewidth}}
\caption{Frozen 30-paper targeted audit. ``Object'' names the main selected or optimized object used for coding.}\label{tab:routing-audit-corpus}\\
\toprule
\# & Paper & Code & Object \\
\midrule
\endfirsthead
\toprule
\# & Paper & Code & Object \\
\midrule
\endhead
\bottomrule
\endfoot
1 & DyLAN & T & Task-conditioned agent team followed by dynamic collaboration \\
2 & GPTSwarm / \emph{Language Agents as Optimizable Graphs} & I & Graph connectivity, with separate node/prompt optimization \\
3 & MacNet & I & Communication topology and agent scale \\
4 & AgentPrune & I & Spatial/temporal communication-message graph \\
5 & AgentDropout & J & Agents and communication edges jointly \\
6 & MasRouter & J & Collaboration mode, agent roles, and LLM choice \\
7 & ARG-Designer & J & Number of agents, roles, and communication links \\
8 & AMAS & I & Environment-conditioned communication topology \\
9 & DynaSwarm & I & Query-specific graph structure \\
10 & Guided Topology Diffusion & I & Generated communication topology \\
11 & EIB-LEARNER / \emph{Information Propagation Effects} & I & Communication topology and sparsity \\
12 & CARD & I & Environment-conditioned communication graph \\
13 & Agent Q-Mix & S & Round-wise inter-agent communication actions \\
14 & STRMAC / \emph{Optimal-Agent-Selection} & S & Next acting agent conditioned on interaction history \\
15 & Dynamic Role Assignment for Multi-Agent Debate & T & Model-to-role assignment before debate \\
16 & RCR-Router & S & Memory/context subsets routed to roles over rounds \\
17 & MaAS & J & Query-dependent agentic architecture sampled from a supernet \\
18 & MAS-Zero & J & Entire sub-MAS programs and workflows \\
19 & OWL / Optimized Workforce Learning & J & Planner, coordinator, and specialized workforce behavior \\
20 & Learning to Deliberate / MPDF & S & Per-agent high-level deliberation actions \\
21 & Multi-Agent Deep Research / M-GRPO & J & Planner/sub-agent policies and invocation behavior \\
22 & WideSeek-R1 & J & Lead-agent orchestration and parallel subagent execution \\
23 & MATPO & J & Planner and worker policies plus tool interaction \\
24 & MAPoRL & J & Collaborative multi-turn behavior of interacting LLMs \\
25 & ReMA & J & Hierarchical meta-thinking and reasoning agents \\
26 & CoMAS & J & Agent policies trained from inter-agent interaction rewards \\
27 & SiriuS & J & Agent behavior via an evolving trajectory/experience library \\
28 & AFlow & J & Full code-represented agentic workflow \\
29 & ADAS / Meta Agent Search & J & Prompts, tools, control flow, and agent-system code \\
30 & LGC-MARL & J & LLM-generated subtask/dependency graph and graph-collaboration policy \\
\end{longtable}
\normalsize

\end{document}